\newtheorem{theorem}{Theorem}
\newtheorem{lemma}{Lemma}
\newtheorem{corollary}{Corollary}
\theoremstyle{definition}
\newtheorem{definition}{Definition}
\newtheorem{remark}{Remark}
\newtheorem{example}{Example}
\newcommand{\Y}{\mathcal{Y}}
\newcommand{\X}{\mathcal{X}}
\newcommand{\Z}{\mathcal{Z}}
\newcommand{\F}{\mathcal{H}}
\newcommand{\R}{\mathbb{R}}
\newcommand{\D}{\mathcal{D}}
\newcommand{\E}{\mathbb{E}}
\newcommand{\Ne}{\mathcal{N}}
\newcommand{\Rad}{\mathfrak{R}}
\title{Fine-grained Generalization Analysis of Structured Output Prediction\footnote{To appear in IJCAI 2021} }
\author{
Waleed Mustafa$^1$\and
Yunwen Lei$^2$\and
Antoine Ledent$^{1}$\And
Marius Kloft$^1$\\
\affiliations
$^1$TU Kaiserslautern\\
$^2$ University of Birmingham\\
\emails
mustafa@cs.uni-kl.de,
y.lei@bham.ac.uk,
\{ledent, kloft\}@cs.uni-kl.de
}
\begin{document}

\maketitle

\begin{abstract}
In machine learning we often encounter structured output prediction problems (SOPPs), i.e. problems where the output space admits a rich internal structure. Application domains where SOPPs naturally occur include natural language processing, speech recognition, and computer vision. Typical SOPPs have an extremely large label set, which grows exponentially as a function of the size of the output. Existing generalization analysis implies generalization bounds with at least a square-root dependency on the cardinality $d$ of the label set, which can be vacuous in practice. In this paper, we significantly improve the state of the art by developing novel high-probability bounds with a logarithmic dependency on $d$. Moreover, we leverage the lens of algorithmic stability to develop generalization bounds in expectation without any dependency on $d$. Our results therefore build a solid theoretical foundation for learning in large-scale SOPPs. Furthermore, we extend our results to learning with weakly dependent data. 

\end{abstract}

\section{Introduction}
Structured output prediction (SOP) refers to a broad class of machine learning problems with a rich structure in the output space. For instance, the output may be a sequence of tags in part-of-speech (POS) tagging, a sentence in machine translation, or a grid of segmentation labels in image segmentation.

A distinguishing property of these tasks is that the loss function  admits a decomposition along the output structures. For instance, if the output is a sequence of partial labels, the loss function could be the Hamming distance. The output structure makes those problems substantially different, both algorithmically and theoretically, from well-studied machine-learning methods such as binary classification. Algorithms specifically targeted at SOPPs have been put forward in~\cite{lafferty2001conditional,ciliberto2016consistent,taskar2003max,tsochantaridis2005large,alg4,alg5,chen2017deeplab}, to mention but a few. 

Whilst the subject of SOP is well explored from a practical point of view, existing theoretical analyses have several limitations. For instance, the results in~\cite{taskar2003max,collins2001parameter} apply only to specific factor graphs and bound errors measured only by the Hamming loss, while other losses such as \textit{edit distance} and \textit{BLUE scores} are more natural in many applications. \cite{mcallester2007generalization} introduced guarantees that apply to general losses but only to randomized linear algorithms and admit only a square-root dependence on the size of substructures. In \cite{cortes2016structured}, the authors introduced general bounds that apply to general factor graphs and general losses from the viewpoint of function class capacity. However, the associated bounds exhibit a square-root dependence on the number $d$ of categories a subset of substructures can take, which can become vacuous when applied to extreme multi-class contexts \cite{lei2019data} or models that assume a large dependence between the substructures. 

In this paper, we aim to advance the state of the art in the theoretical foundation of SOP by developing generalization bounds applicable to large-scale problems with millions of labels.
Our contributions are as follows.

\noindent 1. We apply the celebrated technique of Rademacher complexity to develop high-probability generalization bounds with a \emph{log} dependency on the size of the label set. This substantially improves the existing state of the art, which comes with at least a square-root dependency. We achieve this improvement by using covering numbers measured by the $\ell_{\infty}$-norm, which can exploit the Lipschitz continuity of loss functions with respect to (w.r.t.) the  $\ell_{\infty}$-norm. For comparison, the existing complexity analysis uses the Lipschitz continuity w.r.t. the $\ell_2$-norm~\cite{cortes2016structured}, which does not match the regularity of loss functions in structured output prediction and thus leads to suboptimal bounds.

\noindent 2. We leverage the framework of algorithmic stability to further remove the log dependency for generalization bounds in expectation. We consider two popular methods for structured output prediction: stochastic gradient descent (SGD) and regularized risk minimization (RRM). We adapt the existing stability analysis in a way to exploit the Lipschitz continuity w.r.t. the $\ell_\infty$-norm of loss functions in SOP.

\noindent 3. We extend our discussion to learning with weakly dependent training examples, which are widespread in SOPPs. For example, in natural language processing (NLP), a data set can come in the form of sets of documents, while learning is performed at the sentence level. While assuming that the sentences are independent is inaccurate, it is reasonable to assume that the dependency between sentences decreases when their distance in a document increases.


The remaining parts of the paper are structured as follows. We discuss some related work in Section \ref{sec:work} and present the problem formulation in Section \ref{sec:formulation}. We present our main results on generalization bounds in Section \ref{sec:result}, which are extended to learning with dependent examples in Section \ref{sec:dependent}. We conclude the paper in Section \ref{sec:conclusion}.

\section{Related Work\label{sec:work}}
We first review some work on structured output prediction. Many algorithms have been developed to solve structured output prediction problems. Early techniques considered generative probabilistic models (e.g.,  hidden Markov models \cite{rabiner1986introduction}). Motivated by the success of support vector machines (SVM), large-margin models for structured data were proposed in \cite{taskar2003max,tsochantaridis2005large}. To reduce the model complexity, conditional random fields (CRFs) \cite{lafferty2001conditional} model the conditional distribution of the structured outputs rather than modeling the joint probability of the input and output. A key property of these models is that their prediction step can be viewed as maximising a scoring function. Such a scoring function enjoys a decomposition over the substructure so that the maximisation can be done efficiently. CRFs were combined with convolutional neural networks (CNNs) in \cite{chen2017deeplab} to approach semantic segmentation problems, achieving better performance than CNNs alone. 

In \cite{collins2001parameter,taskar2003max}, the authors showed a generalization bound for their proposed models. However, they considered restricted models and losses (Hamming loss).
A PAC-Bayesian bound is proved in \cite{mcallester2007generalization} for Bayesian prediction algorithms. In \cite{cortes2016structured} the authors introduced a more general generalization bound that applies to general losses and models. Their bound scales as the square root of the number of classes. This can lead to vacuous bounds when the number of classes per substructure and their dependence on each other continue to increase. 

\cite{ciliberto2016consistent} introduced the \textit{implicit embedding} approach to structured output prediction where the label is encoded into a vector in some Hilbert space via an encoding function. A decoding function is also defined so that prediction is performed by composing a regression function and the decoding function, thus establishing a connection between structured output prediction and regression. They provided generalization bounds of the order of $O(m^{-\frac{1}{4}})$, where $m$ is the number of samples, which can be a problem for large $m$. Recently, \cite{ciliberto2019localized} introduced the setting of localized structured output prediction, where they assume a form of weak dependence between substructures. Their model utilizes such assumption by treating each part of the structure as an independent sample. They prove bounds of the order $O((ml)^{-\frac{1}{4}})$ for their method, where $l$ is the number of substructures, under weakly dependent samples.

We now review the related work for multi-class classification (MCC), which is a specific case of structured output prediction. Various capacity measures of function classes were used to study generalization bounds of MCC, including Rademacher complexities~\cite{lei2015multi,maurer2016vector,li2018multi,maximov2018rademacher,musayeva2019rademacher},
covering numbers~\cite{zhang2004statistical,lei2019data,AAAI_conv} and the fat-shattering dimension~\cite{guermeur2017lp}. While initial analyses implied generalization bounds with at least a linear dependency on the number of classes~\cite{koltchinskii2002empirical}, the couplings among class components were exploited recently to get a dependency that can be as mild as square-root~\cite{cortes2016structured,li2018multi} or even logarithmic~\cite{lei2019data,wu2021fine}.

\section{Problem Formulation\label{sec:formulation}}


SOP refers to machine learning problems with an internal structure in the outputs (and potentially also in the inputs). For example in sequence-to-sequence prediction, both the input and output are sequences.  In syntax analysis, the inputs are sequences of words and the output is a parse tree.

Let $\mathcal{X}$ be an input space (e.g., sentences in a given language) and $\Y$ be an output space (e.g., POS tags for the input sentences). In structured output prediction, the output space can often be decomposed into a number of substructures. Take POS tags as an example, where each word tag represents a substructure and the sequence of tags constitutes the structured output. Formally we define $\Y = \Y_1 \times \cdots \times \Y_l$, where $\Y_k$ is the set of possible classes a substructure $k$ can take. For a point $(x,y) \in \X \times \Y$, let $y^k$ denote the $k$-th element in $y$ (i.e., $y = (y^1, \ldots, y^l)$).

We aim to learn a scoring function $h:\X \times \Y \rightarrow \R$ based on which we can perform the prediction as $\hat{y}(x)  = \arg\max_{y \in \Y} h(x,y)$. The score function in structured output prediction can be described via a  factor graph $G = (V,F, E)$, where $V = [l]:=\{1, \ldots, l\}$ is the set of variable nodes, $F$ is a set of factor nodes, and $E$ is a set of undirected edges between a variable node and a factor node. Let $\Ne(f)$ be the set of nodes connected to the factor $f$ by an edge and $\Y_f = \Pi_{k\in \Ne(f)} \Y_k$. For brevity, we assume that $|\Y_f|=d$ for all $f$, where $|\Y_f|$ denotes the cardinality of $\Y_f$. Now we define the scoring function $h(x,y)$ for $x\in \X$ and $y \in \Y$ as
\[h(x,y) = \sum_{f \in F }h_f(x, y_f),\]
where $y_f:=\{y^j:j\in\Ne(f)\}$ and $h_f: \mathcal{X} \times \mathcal{Y}_f \rightarrow \mathbb{R}$. Figure~\ref{fig:factor_graphs} gives an example of factor graphs and scoring functions.
  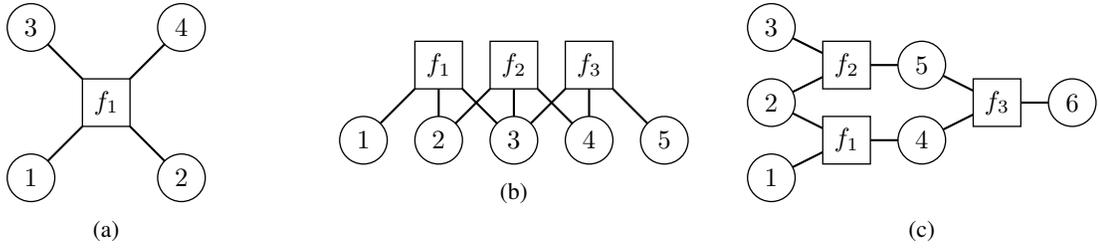
\begin{figure*}[h]
    \centering

    \begin{subfigure}{0.3\textwidth}
      \centering
      \begin{tikzpicture}
    \Vertex[x=1,y=1,L=$1$]{y1}
    \Vertex[x=3,y=1,L=$2$]{y2}
    \Vertex[x=1,y=3,L=$3$]{y3}
    \Vertex[x=3,y=3,L=$4$]{y4}
    \begin{scope}[VertexStyle/.append style = {shape = rectangle}]
      \Vertex[x=2,y=2,L={$f_1$}]{f1}
\end{scope}

    \Edge(y1)(f1)
    \Edge(y2)(f1)
    \Edge(y3)(f1)
    \Edge(y4)(f1)

  \end{tikzpicture}
  \caption{}
\end{subfigure}
\begin{subfigure}{0.3\textwidth}
  \centering
  \begin{tikzpicture}

    \Vertex[x=1,y=1,L={$1$}]{y1}
    \Vertex[x=2,y=1,L={$2$}]{y2}
    \Vertex[x=3,y=1,L={$3$}]{y3}
    \Vertex[x=4,y=1,L={$4$}]{y4}
    \Vertex[x=5,y=1,L={$5$}]{y5}
    \begin{scope}[VertexStyle/.append style = {rectangle}]
      \Vertex[x=2,y=2,L={$f_1$}]{f1}
    \end{scope}
    \begin{scope}[VertexStyle/.append style = {rectangle}]
      \Vertex[x=3,y=2,L={$f_2$}]{f2}
    \end{scope}

    \begin{scope}[VertexStyle/.append style = {rectangle}]
      \Vertex[x=4,y=2,L={$f_3$}]{f3}
    \end{scope}


    \Edge(y1)(f1)
    \Edge(y2)(f1)
    \Edge(y3)(f1)
    \Edge(y2)(f2)
    \Edge(y3)(f2)
    \Edge(y4)(f2)
    \Edge(y3)(f3)
    \Edge(y4)(f3)
    \Edge(y5)(f3)
  \end{tikzpicture}
  \caption{}
  \label{fig:sequence}
\end{subfigure}
\begin{subfigure}{0.30\textwidth}
      \centering
      \begin{tikzpicture}
    \Vertex[x=1,y=1,L=$1$]{y1}
    \Vertex[x=1,y=2,L=$2$]{y2}
    \Vertex[x=1,y=3,L=$3$]{y3}
    \Vertex[x=3,y=1.5,L=$4$]{y4}
    \Vertex[x=3,y=2.5,L=$5$]{y5}
     \Vertex[x=5,y=2,L=$6$]{y6}
    \begin{scope}[VertexStyle/.append style = {shape = rectangle}]
      \Vertex[x=2,y=1.5,L={$f_1$}]{f1}
\end{scope}
\begin{scope}[VertexStyle/.append style = {shape = rectangle}]
      \Vertex[x=2,y=2.5,L={$f_2$}]{f2}
\end{scope}
\begin{scope}[VertexStyle/.append style = {shape = rectangle}]
      \Vertex[x=4,y=2,L={$f_3$}]{f3}
\end{scope}

     \Edge(y1)(f1)
    \Edge(y2)(f1)
    \Edge(y3)(f2)
    \Edge(y2)(f2)
    \Edge(y5)(f2)
    \Edge(y4)(f1)
     \Edge(y5)(f3)
      \Edge(y4)(f3)
       \Edge(y6)(f3)

  \end{tikzpicture}
  \caption{}
\end{subfigure}

\caption{Examples of factor graphs.  Panel (a) represents a factor graph with only one factor node. Note that $\Ne(f_1)=\{1,2,3,4\}$ and $\Y_{f_1}=\Y_1\times\Y_2\times\Y_3\times\Y_4$. If $\Y_i=\{1,2,3\}$ for all $i$, then $|\Y_{f_1}|=3^4$. The corresponding scoring function is  $h(x,y) =  h_{f_1}(x,y^1,y^2,y^3,y^4)$. Panel (b) depicts an example of factor graph that assumes a sequence-like structure. The scoring function in this case is $h(x,y) = h_{f_1}(x,y^1,y^2,y^3) + h_{f_2}(x,y^2,y^3,y^4) + h_{f_3}(x,y^3,y^4,y^5)$. Panel (c) depicts an example of tree-like factor graph. }
\label{fig:factor_graphs}
\end{figure*}

Let  $S = \{(x_i,y_i)\}_{i=1}^m$ be a training set with $(x_i,y_i) \in \X \times \Y$ being independently drawn from a distribution $\D$ over $\X\times\Y$.  We use a loss function $L: \Y \times \Y \rightarrow \R_+$ to measure the performance of prediction models, based on which we can define the margin loss \cite{cortes2016structured} as $L_\rho: \X \times \Y \times \F \rightarrow \R$:
  \begin{equation}
    \label{eq:loss}
    L_{\rho}(x,y,h) = \Phi^*(\max_{y' \ne y} \{ L(y',y) - \frac{1}{\rho}[h(x,y) - h(x,y')]\}),
\end{equation}
where $\Phi^*(r) = \min(M,\max(0,r))$, $M = \max_{y,y'}L(y,y')$ and $\F \subset \{h: \X \times \Y \rightarrow \R\}$ is some hypothesis class. Note that $L_\rho(x,y,h) \ge L(\hat{y}(x),y)$. Therefore, the obtained bounds for $L_\rho$ will also hold for $L$. We then define the population risk $R(h)$ and empirical risk $R_S(h)$ to quantify the performance of a model $h$ on testing and training examples, respectively as:
\[
    R(h) = \E_{\D}[L_\rho(x,y,h)],\quad
    R_S(h) = \frac{1}{m}\sum_{i = 1}^mL_\rho(x_i,y_i,h).
\]
Let $\Psi$ be a feature function which maps an input-output example $(x,y) \in \X \times \Y$ to $\R^D$, where $D$ is the dimension of feature space. In structured output prediction, the feature extractor takes a composite form
according to the factor graph $G$, that is, $\Psi(x,y) = \sum_{f \in F} \Psi_f(x,y_f)$, where $\Phi_f: \mathcal{X} \times \mathcal{Y}_f \rightarrow \mathbb{R}$. We consider a linear scoring function $h^w(x,y) = \left<w,\Psi(x,y)\right>$ indexed by a $w \in \R^D$. Then the hypothesis space becomes
\begin{equation}
  \label{eq:linear_class}
  \F_p = \big\{ (x,y) \mapsto \left<w, \Psi(x,y) \right> : \|w\|_p \le \Lambda , (x,y) \in \X  \times \Y\big\},
\end{equation}
where $\|w\|_p = (\sum_{i=1}^D |w_i|^d)^{\frac{1}{p}}$ is the $\ell_p$-norm of $w=(w_1,\ldots,w_D)$.
We also define the class of loss functions
\begin{equation}
  \label{eq:loss_class}
F_{p,\Lambda,\rho} := \big\{(x,y) \mapsto L_{\rho}(x,y,h^w): h^w \in \F_p\big\}.
\end{equation}




\section{Main Results\label{sec:result}}

In this section, we present our main results on generalization bounds for structured output prediction. 
We consider two types of generalization bounds: complexity-based bounds and stability-based bounds.
Our aim is to develop bounds with a very mild dependency on the size of the label set, thus laying a solid foundation for structured output prediction, where the size of label set $\Y$ is often extremely large in practice. A key discovery to both our stability-based and complexity-based analysis is to note the Lipschitz continuity of loss functions w.r.t. infinity-norm $\|\cdot\|_\infty$.
\begin{definition}[Lipschitz continuity]
    We say that a loss function $L(x,y,h)$ is $(\tau,\ell_\infty)$-Lipschitz in the last argument if, for any $h,\tilde{h} \in \F$ and all $(x,y) \in \X \times \Y$, we have:
 $$
 |L(x,y,h) - L(x,y,\tilde{h})| \le \tau \max_{y'\in \Y}|h(x,y') - \tilde{h}(x,y')|.
 $$
  \end{definition}
The existing analysis \cite{cortes2016structured} uses the $(\tau_2,\ell_2)$ Lipschitz continuity of loss functions:
\[|L(x,y,h) - L(x,y,\tilde{h})|\le \tau_2\Big(\sum_{y'\in \Y}|h(x,y') - \tilde{h}(x,y')|^2\Big)^{1/2}.
\]
Note that the Lipschitz continuity w.r.t. $\ell_\infty$-norm is much stronger than that w.r.t. $\ell_2$-norm. Indeed, if $L$ is $(\tau,\ell_\infty)$-Lipschitz then it is also $(\tau,\ell_2)$ Lipschitz since $\|\cdot\|_\infty\leq\|\cdot\|_2$. As a comparison, a $(\tau_2,\ell_2)$-Lipschitz function can be $(\tau_2\sqrt{|\Y|},\ell_\infty)$-Lipschitz due to the norm relationship $\|\cdot\|_2\leq\sqrt{|\Y|}\|\cdot\|_\infty$ (the equality can hold in some cases).

In Lemma \ref{lem:lipschitzness} we build the $\ell_\infty$-Lipschitz continuity of $L_\rho$ for structured output prediction. A remarkable property is that the involved Lipschitz constant is independent of $|\Y|$. This shows that the loss function in structured output prediction is well behaved in the sense of Lipschitz continuity. However, the existing analysis based on the $(\tau_2,\ell_2)$-Lipschitz continuity fails to exploit this strong regularity, and therefore only implies suboptimal bounds with at least a square-root dependency on the size of the label set.
The proof of Lemma~\ref{lem:lipschitzness} below is given in the appendix. 
\begin{lemma}\label{lem:lipschitzness}
The loss function $L_\rho$ is $(\frac{2}{\rho},\ell_\infty)$-Lipschitz with respect to the scoring function $h$ for all $x \in \X$ and $y \in \Y$.
\end{lemma}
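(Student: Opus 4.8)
The plan is to factor the desired Lipschitz estimate through three elementary observations: the regularity of the outer clipping map $\Phi^*$, the non-expansiveness of the maximum operator, and a triangle inequality applied to the inner score difference. Throughout, fix $(x,y)\in\X\times\Y$ and two scoring functions $h,\tilde h$.

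First I would record that $\Phi^*(r)=\min(M,\max(0,r))$ is $1$-Lipschitz in $r$, being a composition of the two clipping operations $r\mapsto\max(0,r)$ and $t\mapsto\min(M,t)$, each of which is manifestly $1$-Lipschitz. Consequently, writing $g_h(y')=L(y',y)-\frac{1}{\rho}[h(x,y)-h(x,y')]$ and defining $g_{\tilde h}$ analogously, we immediately obtain $|L_\rho(x,y,h)-L_\rho(x,y,\tilde h)|\le\big|\max_{y'\ne y}g_h(y')-\max_{y'\ne y}g_{\tilde h}(y')\big|$, so the clipping can be discarded at the cost of nothing.

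Second, I would invoke the standard fact that for any two real-valued functions on a common domain, $\big|\max_{y'}g_h(y')-\max_{y'}g_{\tilde h}(y')\big|\le\max_{y'}|g_h(y')-g_{\tilde h}(y')|$; this follows by evaluating at the maximizer of one side, bounding, and then arguing symmetrically. The ground-truth terms $L(y',y)$ cancel in the pointwise difference, leaving $g_h(y')-g_{\tilde h}(y')=-\frac{1}{\rho}\big([h(x,y)-\tilde h(x,y)]-[h(x,y')-\tilde h(x,y')]\big)$.

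Finally, the triangle inequality bounds $|g_h(y')-g_{\tilde h}(y')|$ by $\frac{1}{\rho}\big(|h(x,y)-\tilde h(x,y)|+|h(x,y')-\tilde h(x,y')|\big)\le\frac{2}{\rho}\max_{y''\in\Y}|h(x,y'')-\tilde h(x,y'')|$, and taking the maximum over $y'$ delivers the claimed constant. The only point that genuinely requires care is the origin of the factor $2$: because the inner margin argument evaluates the scores at \emph{both} the fixed label $y$ and the competing label $y'$, the triangle inequality produces two separate $\ell_\infty$-distance terms rather than one, which is precisely why the Lipschitz constant is $2/\rho$ and not $1/\rho$. Every other step is routine, so I expect no serious obstacle beyond tracking this doubling correctly.
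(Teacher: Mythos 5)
Your proof is correct and follows essentially the same route as the paper's: discard the clipping $\Phi^*$ by its $1$-Lipschitzness, use the non-expansiveness of the max over $y'\ne y$ so the $L(y',y)$ terms cancel, and apply the triangle inequality to the two score evaluations (at $y$ and at $y'$) to obtain the factor $2/\rho$. If anything, you are slightly more explicit than the paper, which drops $\Phi^*$ in its first inequality without comment.
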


\subsection{Complexity-based Generalization Bounds}
We develop generalization bounds with high probability here. Our basic tool to this aim is the Rademacher complexity. 
\begin{definition}
    The empirical Rademacher complexity of a function class $\F$ of real-valued functions is defined as:
    \begin{equation}
      \label{eq:Rademacher_sep}
      \Rad_S(\F) = \E_{\sigma}[\sup_{f \in \F}\frac{1}{n} \sum_{i = 1}^n\sigma_i f(x_i)],
    \end{equation}
    where $\{\sigma_i\}$ are random variables with equal probability of being either $+1$ or $-1$.
  \end{definition}

  \begin{theorem}\label{thm:main}
  Let $\rho > 0$ be. Then the Rademacher complexity of the loss class $F_{p,\Lambda,\rho}$ is bounded as follows:
 \begin{equation}
  \Rad(F_{p,\Lambda,\rho}) \le \frac{4}{m} + \frac{144 \sqrt{q-1}\Psi^* \Lambda |F|}{\rho\sqrt{m}}  \tilde{L},
\end{equation}
where $\tilde{L} = \sqrt{\log(2md|F|[8\Psi^* \Lambda m|F|/\rho + 3]+1)}\log(m)$, $\Psi^* = \sup_{f\in F,y \in \Y_f,x\in \X} \|\Psi_f(x,y)\|_q$,
and $q=p/(p-1)$. 
 \end{theorem}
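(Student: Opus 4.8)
The plan is to control $\Rad(F_{p,\Lambda,\rho})$ by a chaining (Dudley) argument driven by an $\ell_\infty$-covering number of the loss class, and to estimate that covering number by passing to the scoring functions and exploiting the additive factor-graph structure. The first reduction is to invoke Lemma~\ref{lem:lipschitzness}: because $L_\rho$ is $(2/\rho,\ell_\infty)$-Lipschitz in $h$, for every sample index $i$ we have $|L_\rho(x_i,y_i,h^w)-L_\rho(x_i,y_i,h^{\tilde w})|\le \tfrac{2}{\rho}\max_{y'\in\Y}|h^w(x_i,y')-h^{\tilde w}(x_i,y')|$. Hence a cover of the scores in the uniform metric $\max_{i\in[m],\,y'\in\Y}|h^w(x_i,y')-h^{\tilde w}(x_i,y')|$ at scale $\rho\epsilon/2$ yields a cover of $F_{p,\Lambda,\rho}$ at scale $\epsilon$, and such a uniform ($\ell_\infty$) cover dominates the empirical $\ell_2$ cover that Dudley's bound requires.

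The decisive step is to cover the scores without paying for the exponential cardinality of $\Y$. Writing $h^w(x,y)=\sum_{f\in F}\langle w,\Psi_f(x,y_f)\rangle$ and applying the triangle inequality gives $\max_{i,y'}|h^w(x_i,y')-h^{\tilde w}(x_i,y')|\le |F|\max_{i,f,y_f}|\langle w-\tilde w,\Psi_f(x_i,y_f)\rangle|$, so it suffices to cover the \emph{atomic} linear functionals $w\mapsto\langle w,\Psi_f(x_i,y_f)\rangle$ uniformly over the finite index set of size $N=m|F|d$ (using $|\Y_f|=d$). To this end I would apply a Zhang-type $\ell_\infty$-covering bound for $\ell_p$-constrained linear classes evaluated at $N$ points with $\|\Psi_f(x,y)\|_q\le\Psi^*$, of the form $\log_2\N(\eta)\le \lceil 36(q-1)(\Psi^*)^2\Lambda^2/\eta^2\rceil\,\log_2\bigl(2\lceil 4\Psi^*\Lambda/\eta\rceil N+1\bigr)$. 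This is exactly where the logarithmic dependence on $d$ is won: $d$ enters only through $N$ inside a logarithm, while the leading factor carries the characteristic $q-1$ reflecting the smoothness of the $\ell_q$ geometry.

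Composing the two reductions (Lipschitz scale $2/\rho$ and factor-graph scale $|F|$) transfers this estimate to the loss class, and substituting $\sqrt{\log\N(\epsilon,F_{p,\Lambda,\rho})}$ into Dudley's inequality $\Rad\le\inf_{\alpha>0}\bigl(4\alpha+\tfrac{12}{\sqrt m}\int_\alpha^{M}\sqrt{\log\N(\epsilon,F_{p,\Lambda,\rho})}\,d\epsilon\bigr)$ finishes the proof. The slowly varying logarithmic factor is bounded by its value at the finest scale $\epsilon\approx\rho/(m|F|)$, which produces precisely the argument $2md|F|[8\Psi^*\Lambda m|F|/\rho+3]+1$ of the logarithm in $\tilde L$; the remaining integrand behaves like $1/\epsilon$, so $\int_\alpha^M d\epsilon/\epsilon$ contributes the $\log m$ factor, and taking the base scale $\alpha=1/m$ yields the additive $4/m$ term. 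The constant $144$ then emerges as the product of Dudley's $12$, Zhang's $\sqrt{36}=6$, and the factor $2$ from the Lipschitz and factor-graph reductions. The main obstacle I anticipate is precisely this covering step: one must verify that a uniform $\ell_\infty$ cover of only the $m|F|d$ atomic values genuinely controls the score gaps across all exponentially many $y\in\Y$ at once, and that the Zhang-type bound is instantiated with the correct point count and scale at every dyadic level of the chaining, since this bookkeeping is what pins down the exact argument of the logarithm and the final constants.
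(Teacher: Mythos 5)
Your proposal follows essentially the same route as the paper: the $(2/\rho,\ell_\infty)$-Lipschitz reduction from the loss class to the scores, the triangle inequality over factors reducing to atomic linear functionals on an extended index set of size $m|F|d$, Zhang's $\ell_\infty$-covering bound for $\ell_p$-constrained linear classes, and Dudley's entropy integral with base scale $\alpha=1/m$; even your accounting of the constant $144$ and the origin of the $\log m$ and $4/m$ terms matches the paper's computation. The "obstacle" you flag at the end is exactly the content of the paper's Theorem B.1, which resolves it precisely as you describe, so there is no gap.
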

 The proof strategy is to 
 to relate the complexity of the loss class $F_{p,\Lambda,\rho}$ to a complexity of a scalar linear function class on an extended set of size $m|F|d$, thus moving contribution of $d$ to the complexity from the output dimension to the size of training set. We then utilize standard bounds \cite{zhang2002covering} that admit log dependency on the size of training set. The detailed proof is given in the appendix.
\begin{remark}
    We now compare our results with related work. In \cite{cortes2016structured}, the authors bounded the Rademacher complexity of $F_{p, \Lambda,\rho}$ by a factor graph Rademacher complexity. Specifically for the loss class \eqref{eq:loss_class} they proved
   $
     \Rad(F_{p,\Lambda,\rho}) \le \frac{2\sqrt{2}}{\rho} \hat{\Rad}^G_S(\F_p),
   $
   where $\hat{\Rad}^G_S(\F_p)$ is defined as
\begin{equation*}
   \frac{1}{m} \E_{{ \bf \epsilon}}\Big[\sup_{h \in \F} \sum_{i \in [m], f \in F,y \in \Y_f} \sqrt{|F|} \epsilon_{i,f,y} \left<w,\Psi_f(x_i,y)\right>\Big].
 \end{equation*}
 Here $\epsilon = (\epsilon_{i,f,y})_{i \in [m],f \in F, y \in \Y_f}$ and each $\epsilon_{i,f,y}$ is an independent Rademacher variable. Combining the result from Theorem 2 in \cite{cortes2016structured}, we get the following bound for learning with $\ell_2$-regularization:
   $\Rad(F_{2,\Lambda,\rho}) \le O\Big(\frac{\Lambda\Psi^*|F|\sqrt{d}}{\rho\sqrt{m}}\Big)$.
Note the bound exhibit a square-root dependence on the number of classes per factor $d = |\Y_f|$. Thus it is vacuous for typical SOPPs, where the number of class labels grows exponentially w.r.t. the size of the output. For comparison, our bounds enjoy a log dependency on $d$ and therefore still imply meaningful generalization bounds in this setting.
\end{remark}

 As a direct corollary, we use the connection between generalization and Rademacher complexity to get Theorem \ref{thm:gen-hp}.
\begin{theorem}[Generalization Bounds\label{thm:gen-hp}]  For any $\rho > 0 $, $\delta\in(0,1)$, and $h \in \F_p$, with probability at least $1 - \delta$ over the draw of training data $S$,  the following  bound holds:
\[    R(h ) \le R_S(h) + \frac{8}{m}   + \frac{288\sqrt{q-1}\Psi^* \Lambda |F|}{\rho\sqrt{m}}  \tilde{L}
    + M\sqrt{\frac{\log{\frac{1}{\delta}}}{2m}}.\]
\end{theorem}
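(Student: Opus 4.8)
The plan is to obtain Theorem~\ref{thm:gen-hp} as a routine consequence of the Rademacher bound in Theorem~\ref{thm:main} together with the standard symmetrization-and-concentration argument that links uniform deviations to Rademacher complexity. The single structural fact I need about the loss class $F_{p,\Lambda,\rho}$ is that every member takes values in the bounded interval $[0,M]$: this is immediate from the definition $L_\rho = \Phi^*(\cdots)$ with $\Phi^*(r) = \min(M,\max(0,r))$, so that $0 \le L_\rho(x,y,h) \le M$ uniformly over $(x,y,h)$.

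First I would invoke the classical uniform generalization bound: for any class of $[0,M]$-valued functions, with probability at least $1-\delta$ over the draw of $S$, every $h \in \F_p$ satisfies
\[
R(h) \le R_S(h) + 2\,\Rad(F_{p,\Lambda,\rho}) + M\sqrt{\frac{\log(1/\delta)}{2m}}.
\]
The two ingredients are (i) McDiarmid's bounded-difference inequality applied to $\Phi(S) := \sup_{h\in\F_p}\big(R(h)-R_S(h)\big)$: replacing a single example $(x_i,y_i)$ changes $R_S$, and hence $\Phi(S)$, by at most $M/m$, which yields the deviation term $M\sqrt{\log(1/\delta)/(2m)}$; and (ii) a standard symmetrization step bounding $\E_{\D}[\Phi(S)]$ by $2\,\Rad(F_{p,\Lambda,\rho})$.

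Finally I would substitute the estimate of Theorem~\ref{thm:main},
\[
\Rad(F_{p,\Lambda,\rho}) \le \frac{4}{m} + \frac{144\sqrt{q-1}\,\Psi^*\Lambda|F|}{\rho\sqrt{m}}\,\tilde{L},
\]
into the displayed inequality; multiplying by the factor $2$ turns the constants $4$ and $144$ into $8$ and $288$, reproducing exactly the claimed bound.

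The main obstacle here is essentially none: all the substantive work sits in Theorem~\ref{thm:main}, and the present step is a textbook corollary. The only points demanding care are verifying the $[0,M]$ boundedness of $L_\rho$ (needed both for the McDiarmid increment $M/m$ and for the concentration term) and keeping exact track of the constant factor $2$ so that the coefficients match. If one preferred an empirical-Rademacher formulation, an extra application of McDiarmid to $\Rad_S$ would replace $\Rad(F_{p,\Lambda,\rho})$ by its data-dependent version at the cost of a second, lower-order $\delta$-dependent term; since the statement uses the distribution-dependent $\Rad(F_{p,\Lambda,\rho})$, this refinement is unnecessary.
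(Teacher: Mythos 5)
Your proposal is correct and is exactly the route the paper takes: Theorem~\ref{thm:gen-hp} is stated there as a direct corollary of Theorem~\ref{thm:main} via the standard McDiarmid-plus-symmetrization bound $R(h) \le R_S(h) + 2\,\Rad(F_{p,\Lambda,\rho}) + M\sqrt{\log(1/\delta)/(2m)}$ for $[0,M]$-valued losses, with the factor of $2$ accounting for the constants $8$ and $288$. No further commentary is needed.
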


\subsection{Stability-based Generalization Bounds}
In this section, we present generalization bounds in expectation for structured output prediction by leveraging the lens of algorithmic stability. Algorithmic stability is a fundamental concept in statistical learning theory, which measures the sensitiveness of output models when the training dataset of an algorithm $A$ is slightly perturbed.
For any algorithm $A$, we use $A(S)$ to denote the model produced by running $A$ over the training examples $S$.
\begin{definition}[Uniform Stability\label{def:unif-stab}]
  A stochastic algorithm $A$ is $\epsilon$-uniformly stable if, for all training datasets $S,\widetilde{S}\in\mathcal{Z}^n$ that differ by at most one example, we have
  \begin{equation}\label{unif-stab}
  \sup_{x,y}\mathbb{E}_A\big[L_\rho(x,y,A(S))-L_\rho(x,y,A(\widetilde{S}))\big]\leq\epsilon.
  \end{equation}
\end{definition}
Algorithmic stability naturally implies quantitative generalization bounds, as shown in the following lemma~\cite{shalev2010learnability}.
\begin{lemma}[Generalization via uniform stability\label{lem:gen-stab}]
  Let $A$ be $\epsilon$-uniformly stable. Then
  $
  \big|\mathbb{E}_{S,A}\big[R(A(S))-R_S(A(S))\big]\big|\leq\epsilon.
  $
\end{lemma}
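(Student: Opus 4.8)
The plan is to prove the two-sided bound via the classical ghost-sample symmetrization together with a renaming argument that reduces the expected generalization gap to a single one-example perturbation, at which point Definition~\ref{def:unif-stab} applies directly. Throughout I would write $z_i=(x_i,y_i)$ and abbreviate $L_\rho(z,h):=L_\rho(x,y,h)$ for $z=(x,y)$, so that $R(h)=\E_{\D}[L_\rho(z,h)]$ and $R_S(h)=\frac1n\sum_i L_\rho(z_i,h)$.

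First I would introduce an independent ghost sample $S'=\{z_i'\}_{i=1}^n$, drawn i.i.d. from $\D$ and independent of both $S$ and the internal randomness of $A$. Since a fresh test point has the same law as each $z_i'$ and is independent of $A(S)$, the population risk can be rewritten as an average over the ghost sample, giving
\[
\E_{S,A}[R(A(S))] = \E_{S,S',A}\Big[\tfrac{1}{n}\sum_{i=1}^n L_\rho(z_i', A(S))\Big],
\]
whereas by definition $\E_{S,A}[R_S(A(S))] = \E_{S,A}\big[\tfrac1n\sum_i L_\rho(z_i, A(S))\big]$. Subtracting yields
\[
\E_{S,A}[R(A(S)) - R_S(A(S))] = \frac1n\sum_{i=1}^n \E_{S,S',A}\big[L_\rho(z_i', A(S)) - L_\rho(z_i, A(S))\big].
\]

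The heart of the argument is the renaming step. Let $S^{(i)}$ denote the dataset obtained from $S$ by replacing $z_i$ with the ghost point $z_i'$. Because $S$ and $S'$ are i.i.d., the joint law of $(S,S')$ is invariant under swapping the coordinates $z_i\leftrightarrow z_i'$; applying this measure-preserving swap inside the expectation turns $L_\rho(z_i', A(S))$ into $L_\rho(z_i, A(S^{(i)}))$, so that
\[
\E_{S,S',A}\big[L_\rho(z_i', A(S))\big] = \E_{S,S',A}\big[L_\rho(z_i, A(S^{(i)}))\big].
\]
Now $S$ and $S^{(i)}$ differ in exactly one example, so for every fixed realization of $(S,S')$ the uniform stability bound \eqref{unif-stab} (with the supremum over test points dominating the fixed point $z_i$) gives $\E_A[L_\rho(z_i, A(S^{(i)})) - L_\rho(z_i, A(S))]\le\epsilon$. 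Taking the outer expectation over $(S,S')$, summing over $i$, and averaging proves $\E[R(A(S)) - R_S(A(S))]\le\epsilon$; running the identical argument with the two terms interchanged (equivalently, applying stability to the pair $(S^{(i)},S)$) gives the matching lower bound $-\epsilon$, whence the absolute-value bound.

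The step I expect to be the main obstacle is making the renaming argument fully rigorous: one must verify that the swap $z_i\leftrightarrow z_i'$ is genuinely measure-preserving and that the randomness of $A$ can be held fixed while performing it, and that the supremum in Definition~\ref{def:unif-stab} legitimately covers the contribution of the data-dependent point $z_i$. Care is also needed because $A$ is stochastic, so every comparison must be taken in expectation over $A$'s internal randomness, conditionally on $(S,S')$, before integrating; the supremum over $(x,y)$ in \eqref{unif-stab} is precisely what lets us pass from a worst-case test point to the random-but-fixed point $z_i$ prior to taking the outer expectation.
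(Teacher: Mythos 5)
The paper does not prove this lemma itself---it is quoted directly from \cite{shalev2010learnability}---and your argument is exactly the standard ghost-sample/renaming proof used there: it is correct, including the two points you flag as delicate (the swap $z_i\leftrightarrow z_i'$ is measure-preserving because all $2m$ points are i.i.d.\ and the swap can be performed conditionally on $A$'s internal randomness, and the supremum over $(x,y)$ in Definition~\ref{def:unif-stab} is applied for each fixed realization of $(S,S')$ before integrating, which legitimately dominates the data-dependent point $z_i$). No gap; the only cosmetic issue is that you write $n$ for the sample size where the paper mostly uses $m$.
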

We will apply algorithmic stability to study two representative algorithms for SOP: regularization and stochastic gradient descent.
For brevity, we use the abbreviation $R(w)=R(h^w), R_S(w)=R_S(h^w)$, etc. We also write $w^*=\arg\inf_w R(w)$ for the minimizer of the population risk.

\textbf{Regularized Risk Minimization}. RRM is a popular scheme to overcome overfitting in machine learning. The basic idea is to add a regularizer to the empirical risk and build a regularized empirical risk $R_S^\lambda$. Then we minimize the resulting objective function to obtain a model $w_S$ as follows:
\begin{equation}\label{regularization}
w_S=\arg\min_{w}\big[R_S^\lambda(h^w):=R_S(h^w)+\frac{\lambda}{2}\|w\|_2^2\big].
\end{equation}
Here we omit the dependency of $w_S$ on the regularization parameter for brevity.
In the following lemma to be proved in the appendix, we show that the above regularization algorithm is uniformly stable. Let $\kappa:=\sup_{x,y}\|\Psi(x,y)\|_2$.
\begin{lemma}\label{lem:stab-reg}
  Let $A$ be defined as \eqref{regularization}, i.e., $A(S)=w_S$. Then $A$ is $\frac{16\kappa^2}{m\rho^2\lambda}$-uniformly stable.
\end{lemma}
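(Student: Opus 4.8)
The plan is to follow the classical strong-convexity route to uniform stability of regularized risk minimization, the decisive analytic input being the $\ell_\infty$-Lipschitz bound of Lemma~\ref{lem:lipschitzness}, whose constant $2/\rho$ carries no dependence on $|\Y|$. Throughout I treat $L_\rho(x,y,h^w)$ as convex in $w$, which is consistent with the margin construction \eqref{eq:loss}: each map $w\mapsto L(y',y)-\frac1\rho\langle w,\Psi(x,y)-\Psi(x,y')\rangle$ is affine, and the maximum over $y'$ is therefore convex. Granting convexity, the objective $R_S^\lambda$ in \eqref{regularization} is $\lambda$-strongly convex and has a unique minimizer $w_S$, which is what the stability argument needs.

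Write $S$ and $\widetilde S$ as differing only in the last example, with $z_m=(x_m,y_m)\in S$ replaced by $\widetilde z_m=(\widetilde x_m,\widetilde y_m)\in\widetilde S$, and abbreviate $L_\rho(z,h)=L_\rho(x,y,h)$. First I would bound the parameter perturbation. By $\lambda$-strong convexity of $R_S^\lambda$ and optimality of $w_S$,
\[
\frac{\lambda}{2}\|w_{\widetilde S}-w_S\|_2^2\le R_S^\lambda(w_{\widetilde S})-R_S^\lambda(w_S).
\]
Telescoping the right-hand side through $R_{\widetilde S}^\lambda$ and discarding the term $R_{\widetilde S}^\lambda(w_{\widetilde S})-R_{\widetilde S}^\lambda(w_S)\le 0$ (nonpositive since $w_{\widetilde S}$ minimizes $R_{\widetilde S}^\lambda$), the quadratic regularizers cancel and only the single differing example survives:
\[
R_S^\lambda(w_{\widetilde S})-R_S^\lambda(w_S)\le \tfrac1m\big[L_\rho(z_m,h^{w_{\widetilde S}})-L_\rho(z_m,h^{w_S})\big]+\tfrac1m\big[L_\rho(\widetilde z_m,h^{w_S})-L_\rho(\widetilde z_m,h^{w_{\widetilde S}})\big].
\]

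Next I would control each bracket with Lemma~\ref{lem:lipschitzness}: for any $(x,y)$,
\[
|L_\rho(x,y,h^{w_S})-L_\rho(x,y,h^{w_{\widetilde S}})|\le\frac{2}{\rho}\max_{y'\in\Y}|\langle w_S-w_{\widetilde S},\Psi(x,y')\rangle|\le\frac{2\kappa}{\rho}\|w_S-w_{\widetilde S}\|_2,
\]
where the first inequality is the $\ell_\infty$-Lipschitzness and the second is Cauchy--Schwarz together with $\max_{y'}\|\Psi(x,y')\|_2\le\kappa$. The crucial observation is that the maximum over labels yields the single factor $\kappa$ rather than a sum over $\Y$, which is precisely what keeps the bound independent of $|\Y|$; replacing this step by the $\ell_2$-Lipschitz estimate of \cite{cortes2016structured} would reintroduce a $\sqrt{|\Y|}$ factor. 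Substituting the two brackets gives $\frac{\lambda}{2}\|w_S-w_{\widetilde S}\|_2^2\le\frac{4\kappa}{m\rho}\|w_S-w_{\widetilde S}\|_2$, hence $\|w_S-w_{\widetilde S}\|_2\le\frac{8\kappa}{m\rho\lambda}$.

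Finally, applying the same $\ell_\infty$-Lipschitz estimate to an arbitrary test point converts the parameter bound into the stability bound, $\sup_{x,y}|L_\rho(x,y,h^{w_S})-L_\rho(x,y,h^{w_{\widetilde S}})|\le\frac{2\kappa}{\rho}\cdot\frac{8\kappa}{m\rho\lambda}=\frac{16\kappa^2}{m\rho^2\lambda}$, and the expectation over $A$ in Definition~\ref{def:unif-stab} is vacuous because RRM is deterministic. The step I expect to be the main obstacle is justifying the convexity (equivalently, the strong convexity of $R_S^\lambda$ and the uniqueness of $w_S$): the outer clipping $\Phi^*(r)=\min(M,\max(0,r))$ is not a nondecreasing convex map, so one must either argue that it does not destroy convexity in the relevant regime or work with the appropriate convex surrogate; once the unique minimizer is secured, the remaining cancellation-plus-Lipschitz steps are routine. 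I also note that using the single strong-convexity inequality (the factor $\lambda/2$) is what reproduces the stated constant $16$; a symmetric two-sided argument would sharpen it to $8$.
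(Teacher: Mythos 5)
Your argument is correct and follows essentially the same route as the paper's proof: strong convexity of the regularized objective plus optimality to reduce to the two terms involving the single differing example, the $(2/\rho,\ell_\infty)$-Lipschitz bound together with $\|\Psi(x,y)\|_2\le\kappa$ to obtain $\|w_S-w_{\widetilde S}\|_2\le\frac{8\kappa}{m\rho\lambda}$, and one more Lipschitz application to conclude, with identical constants. The convexity caveat you raise about the clipping $\Phi^*$ is fair, but the paper's own proof simply asserts convexity of $L_\rho$ at the same point without further justification, so you are not missing anything relative to the reference argument.
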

This lemma is a variant of the stability bound in \cite{bousquet2002stability}, which, however, requires the loss function to be admissible. We adapt their technique to the setting of structured output prediction and a key step in our analysis is again the Lipschitz continuity of the loss function w.r.t. the $\ell_\infty$ norm. A use of the classical Lipschitz continuity w.r.t. $\ell_2$ norm would incur a bound with at least a square-root dependency on $d$. For comparison, the consideration of Lipschitz continuity w.r.t. the $\ell_\infty$ norm allows us to get stability bounds independent of the size of the label set.

We can combine the Lipschitz continuity of loss functions, the stability of regularization schemes established in Lemma \ref{lem:stab-reg} and Lemma \ref{lem:gen-stab} together to get the following generalization bounds for structured output prediction. Let \[w^\lambda=\arg\inf_w \big[R^\lambda(w):=R(h^w)+\frac{\lambda}{2}\|w\|_2^2\big]\] be the minimizer of the regularized risk. We have the following result, whose proof is given in the appendix.
\begin{theorem}\label{thm:reg}
  Let $w_S$ be defined in \eqref{regularization}. Then
    \begin{equation}\label{reg-a}
  \mathbb{E}\big[R^\lambda(w_S)-R^\lambda(w^\lambda)\big]\leq \frac{16\kappa^2}{m\rho^2\lambda}.
  \end{equation}
  Furthermore, if we choose $\lambda=\frac{4\sqrt{2}\kappa}{\sqrt{m}\rho \|w^*\|_2}$, then
  \begin{equation}\label{reg-b}
  \mathbb{E}\big[R(w_S)\big]-R(w^*)\leq \frac{4\sqrt{2}\kappa\|w^*\|_2}{\sqrt{m}\rho}.
  \end{equation}
\end{theorem}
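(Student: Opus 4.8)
The plan is to prove the two inequalities in sequence, reducing both to the uniform stability estimate already established in Lemma~\ref{lem:stab-reg}, combined with the optimality conditions defining $w_S$ and $w^\lambda$. Throughout, write $\epsilon=\frac{16\kappa^2}{m\rho^2\lambda}$ for the stability parameter and let $R_S^\lambda(w)=R_S(h^w)+\frac{\lambda}{2}\|w\|_2^2$ denote the empirical regularized objective, so that $w_S$ is its minimizer by \eqref{regularization}.

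For the first bound \eqref{reg-a}, I would decompose the excess regularized population risk as
\[
\mathbb{E}[R^\lambda(w_S)-R^\lambda(w^\lambda)]=\mathbb{E}[R^\lambda(w_S)-R_S^\lambda(w_S)]+\mathbb{E}[R_S^\lambda(w_S)-R_S^\lambda(w^\lambda)]+\mathbb{E}[R_S^\lambda(w^\lambda)-R^\lambda(w^\lambda)].
\]
The middle term is nonpositive because $w_S$ minimizes $R_S^\lambda$. The third term vanishes: the regularizer enters $R_S^\lambda(w^\lambda)$ and $R^\lambda(w^\lambda)$ identically and cancels, and since $w^\lambda$ is deterministic (independent of $S$) we have $\mathbb{E}_S[R_S(h^{w^\lambda})]=R(h^{w^\lambda})$. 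In the first term the regularizers again cancel, leaving $\mathbb{E}[R(w_S)-R_S(w_S)]$, which is at most $\epsilon$ by Lemma~\ref{lem:gen-stab}. Summing gives \eqref{reg-a}.

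For the second bound \eqref{reg-b}, I would chain three comparisons. First, $R(w_S)\le R^\lambda(w_S)$ since the regularizer is nonnegative. Second, \eqref{reg-a} gives $\mathbb{E}[R^\lambda(w_S)]\le R^\lambda(w^\lambda)+\epsilon$. Third, the optimality of $w^\lambda$ for $R^\lambda$ gives $R^\lambda(w^\lambda)\le R^\lambda(w^*)=R(w^*)+\frac{\lambda}{2}\|w^*\|_2^2$. Combining these yields
\[
\mathbb{E}[R(w_S)]-R(w^*)\le \frac{\lambda}{2}\|w^*\|_2^2+\frac{16\kappa^2}{m\rho^2\lambda}.
\]
The final step is to minimize this bias--variance expression over $\lambda$; the first-order condition gives $\lambda=\frac{4\sqrt{2}\kappa}{\sqrt{m}\rho\|w^*\|_2}$, exactly the prescribed choice, and substituting back makes both summands equal to $\frac{2\sqrt{2}\kappa\|w^*\|_2}{\sqrt{m}\rho}$, producing \eqref{reg-b}.

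No single step is a genuine obstacle, since the delicate stability estimate (and with it the crucial $\ell_\infty$-Lipschitz argument) has already been isolated in Lemma~\ref{lem:stab-reg}. The care required here lies entirely in the bookkeeping: tracking which terms cancel because $w^\lambda$ is data-independent, which are killed by optimality, and which are controlled by stability. The only substantive choice is the optimization over $\lambda$, which balances the approximation error $\frac{\lambda}{2}\|w^*\|_2^2$ introduced by regularizing against the estimation error $\epsilon$ coming from stability.
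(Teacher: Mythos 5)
Your proposal is correct and follows essentially the same route as the paper: the identical three-term decomposition of $\mathbb{E}[R^\lambda(w_S)-R^\lambda(w^\lambda)]$, with the middle term killed by optimality of $w_S$, the third by data-independence of $w^\lambda$, and the first controlled by Lemmas~\ref{lem:stab-reg} and~\ref{lem:gen-stab}, followed by comparison with $w^*$ and balancing the two terms over $\lambda$. Your explicit remark that $R(w_S)\le R^\lambda(w_S)$ even makes one step slightly more transparent than the paper's write-up.
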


\textbf{Stochastic Gradient Descent}. We now turn to the performance of SGD for structured output prediction. SGD is a popular optimization algorithm with wide applications in learning in a big data setting. Let $w^{(1)}$ be the initial point and $\{\eta_t\}$ be a sequence of positive step sizes. At the $t$-th iteration, we first randomly select an index $i_t$ according to the uniform distribution over $[m]$, which is used to build a stochastic gradient $L_\rho'(x_{i_t},y_{i_t},h^{w^{(t)}})$ ($L_\rho'(x_{i_t},y_{i_t},h^{w^{(t)}})$ denotes a subgradient of $L_\rho(x_{i_t},y_{i_t},h^{w})$ at $w=w^{(t)}$). Then we update the model along the negative direction of the stochastic gradient
\begin{equation}\label{sgd}
w^{(t+1)}=w^{(t)}-\eta_tL_\rho'(x_{i_t},y_{i_t},h^{w^{(t)}}).
\end{equation}
This scheme of selecting a single example to build a stochastic gradient allows SGD to get sample-size independent iteration complexity, and is especially appealing if $m$ is large. Since we consider a linear scoring function $h^w$, the loss function $L_\rho$ is convex w.r.t. $w$. In the following lemma to be proved in the appendix, we build the uniform stability of SGD for structured output prediction. Note here we do not require the loss function to be smooth~\cite{lei2020fine}.
\begin{lemma}\label{lem:stab-sgd}
Let $S=\{z_1,\ldots,z_m\}$ and $S'=\{z_1',\ldots,z_m'\}$ be two datasets that differ only by a single example. Let $\{w^{(t)}\}$ and $\hat{w}^{(t)}$ be two sequences produced by SGD based on $S$ and $S'$, respectively. Then
\[
\mathbb{E}_{A}\big[\|w^{(t+1)}-\hat{w}^{(t+1)}\|_2^2\big]\leq 16e(1+t/m^2)\kappa^2\rho^{-2}\sum_{j=1}^{t}\eta_j^2.
\]
\end{lemma}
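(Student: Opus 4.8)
The plan is to run the standard coupling argument for the stability of SGD, adapted to the nonsmooth convex losses arising here. I would fix the two datasets $S,S'$ differing only in the $i^*$-th example, couple the two runs by feeding them the same sequence of random indices $i_1,i_2,\dots$, and write $\delta_t=w^{(t)}-\hat{w}^{(t)}$, so that $\delta_1=0$. The first ingredient is to convert the $\ell_\infty$-Lipschitz continuity of Lemma~\ref{lem:lipschitzness} into Lipschitz continuity in the parameter $w$: since $h^w(x,y')=\langle w,\Psi(x,y')\rangle$, we have $|h^w(x,y')-h^{\tilde w}(x,y')|\le\kappa\|w-\tilde w\|_2$, and combining this with the $(2/\rho,\ell_\infty)$-Lipschitz bound shows $\|L_\rho'(x,y,h^w)\|_2\le 2\kappa/\rho=:G$ for every subgradient. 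Thus every stochastic gradient used in \eqref{sgd} has $\ell_2$-norm at most $G$.

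Next I would analyze one update in expectation by conditioning on $i_t$. With probability $1-1/m$ the drawn index differs from $i^*$, so both runs apply the subgradient of the same loss $f$; expanding $\|\delta_{t+1}\|_2^2$ and discarding the cross term $-2\eta_t\langle f'(w^{(t)})-f'(\hat w^{(t)}),\delta_t\rangle$, which is nonpositive by convexity of $L_\rho$ in $w$, leaves $\|\delta_{t+1}\|_2^2\le\|\delta_t\|_2^2+4\eta_t^2G^2$. With probability $1/m$ the two runs use different losses at $i^*$, and the triangle inequality with the gradient bound gives $\|\delta_{t+1}\|_2\le\|\delta_t\|_2+2\eta_tG$, hence $\|\delta_{t+1}\|_2^2\le\|\delta_t\|_2^2+4\eta_tG\|\delta_t\|_2+4\eta_t^2G^2$. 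Averaging over $i_t$, taking full expectation, and using Jensen's inequality $\mathbb{E}\|\delta_t\|_2\le\sqrt{\Delta_t}$ with $\Delta_t:=\mathbb{E}\|\delta_t\|_2^2$ yields the scalar recursion $\Delta_{t+1}\le\Delta_t+4\eta_t^2G^2+(4\eta_tG/m)\sqrt{\Delta_t}$.

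Finally I would linearize this recursion. Young's inequality $(4\eta_tG/m)\sqrt{\Delta_t}\le \Delta_t/m^2+4\eta_t^2G^2$ turns it into $\Delta_{t+1}\le(1+1/m^2)\Delta_t+8\eta_t^2G^2$; unrolling from $\Delta_1=0$ gives $\Delta_{t+1}\le 8G^2(1+1/m^2)^t\sum_{j=1}^t\eta_j^2\le 8G^2 e^{t/m^2}\sum_{j=1}^t\eta_j^2$. Substituting $G^2=4\kappa^2/\rho^2$ and applying the elementary inequality $e^{x}\le\tfrac{e}{2}(1+x)$ for $x\in[0,1]$ (so that $e^{t/m^2}\le\tfrac{e}{2}(1+t/m^2)$ whenever $t\le m^2$, which covers the regime of interest since SGD is typically run for $O(m)$ steps) produces the claimed bound $16e(1+t/m^2)\kappa^2\rho^{-2}\sum_{j=1}^t\eta_j^2$.

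The main obstacle is the lack of smoothness: the one-step map of subgradient descent is not non-expansive, so I cannot simply track $\|\delta_t\|_2$ and must instead work with the squared distance and exploit convexity through monotonicity of the subgradient. This produces a recursion coupling $\Delta_t$ and $\sqrt{\Delta_t}$, and the delicate part will be controlling the first-moment term—via Jensen together with the Young step that both linearizes the recursion and keeps the multiplicative factor at $1+1/m^2$—so that unrolling yields the $(1+t/m^2)$ dependence with the stated constant rather than an uncontrolled exponential blow-up.
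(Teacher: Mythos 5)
Your argument follows the paper's proof almost step for step: the same coupling with shared index sequence, the same subgradient-norm bound $\|L_\rho'\|_2\le 2\kappa/\rho$ derived from Lemma~\ref{lem:lipschitzness} and $\|\Psi(x,y)\|_2\le\kappa$, the same case split on whether $i_t$ hits the perturbed example, and the same use of subgradient monotonicity to discard the cross term in the unperturbed case. All of these steps are correct.

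The one genuine issue is the final linearization. You apply Young's inequality with the \emph{fixed} weight $1/m^2$, obtaining $\Delta_{t+1}\le(1+1/m^2)\Delta_t+8\eta_t^2G^2$ and hence a multiplicative factor $(1+1/m^2)^t\le e^{t/m^2}$ after unrolling; converting $e^{t/m^2}$ into the stated additive factor $\frac{e}{2}(1+t/m^2)$ requires $t\le m^2$, as you acknowledge. But the lemma is stated for all $t$, and Corollary~\ref{cor:gen-sgd} runs SGD for $T\asymp m^2$ steps, where the hidden constant may exceed $1$; for $t\gg m^2$ your bound blows up exponentially and no longer implies the claim. The paper avoids this by keeping the split parameter free: it uses $(a+b)^2\le(1+r)a^2+(1+1/r)b^2$ in the perturbed case, which yields the recursion $\Delta_{t+1}\le(1+r/m)\Delta_t+16(1+1/(rm))\eta_t^2\kappa^2\rho^{-2}$, and only after unrolling to horizon $t$ sets $r=m/t$, so that $(1+r/m)^{t-j}\le(1+1/t)^t\le e$ uniformly in $t$ while the price $1+1/(rm)=1+t/m^2$ lands in the additive prefactor. (Incidentally, this also sidesteps your Jensen step, since the paper works with $\Delta_t$ throughout rather than mixing in $\mathbb{E}\|\delta_t\|_2$.) Your proof can be repaired with exactly this modification — replace the weight $1/m^2$ in the Young step by $r/m$ with $r=m/t$ — after which it establishes the lemma unconditionally; as written, it proves the bound only on the restricted range $t\le m^2$.
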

According to Lemma \ref{lem:stab-sgd}, we know that the algorithm becomes more and more unstable as we run more and more iterations. We can use this stability bound to derive generalization bounds of SGD for structured output prediction. The proof is given in the appendix.
\begin{theorem}\label{thm:gen-sgd}
Let $\!\{w^{(t)}\}$ be produced by \eqref{sgd} with $\eta_t\!=\!\eta$. Then
\begin{equation}\label{gen-sgd-a}
\mathbb{E}\big[R(\bar{w}^{(T)})\big]-R(w^*)\leq O\Big((\sqrt{T}+T/m)\kappa^2\eta+\frac{1\!+\!T\kappa^2\eta^2}{T\eta}\Big),
\end{equation}
where $\bar{w}^{(T)}=\frac{1}{T}\sum_{t=1}^{T}w^{(t)}$ is an average of iterates.
\end{theorem}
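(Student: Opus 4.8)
The plan is to decompose the excess risk into an optimization error (measured on the empirical risk) and a generalization gap, and then bound each separately: the optimization error via the standard analysis of subgradient SGD for convex objectives, and the generalization gap via the stability estimate of Lemma~\ref{lem:stab-sgd} together with Lemma~\ref{lem:gen-stab}. Since $w^*$ is the population minimizer and hence independent of $S$, we have $\mathbb{E}_S[R_S(w^*)]=R(w^*)$, so I would write
\[
\mathbb{E}\big[R(\bar{w}^{(T)})\big]-R(w^*)=\mathbb{E}\big[R(\bar{w}^{(T)})-R_S(\bar{w}^{(T)})\big]+\mathbb{E}\big[R_S(\bar{w}^{(T)})-R_S(w^*)\big],
\]
the first term being the generalization gap and the second the optimization error.

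The first ingredient is the $\ell_2$-Lipschitz continuity of $L_\rho$ in $w$: composing the $(2/\rho,\ell_\infty)$-continuity of Lemma~\ref{lem:lipschitzness} with $h^w(x,y')=\langle w,\Psi(x,y')\rangle$ and Cauchy--Schwarz gives
\[
|L_\rho(x,y,h^w)-L_\rho(x,y,h^{\tilde w})|\le\frac{2}{\rho}\max_{y'}|\langle w-\tilde w,\Psi(x,y')\rangle|\le\frac{2\kappa}{\rho}\|w-\tilde w\|_2,
\]
so every subgradient $L_\rho'$ has $\ell_2$-norm at most $2\kappa/\rho$. Because each stochastic gradient $L_\rho'(x_{i_t},y_{i_t},h^{w^{(t)}})$ is an unbiased estimate of a subgradient of $R_S$ (the index $i_t$ being uniform on $[m]$) and $L_\rho$ is convex in $w$, the textbook telescoping of $\|w^{(t+1)}-w^*\|_2^2$ combined with Jensen's inequality (to pass from the average of function values to the value at $\bar{w}^{(T)}$) yields the optimization bound
\[
\mathbb{E}\big[R_S(\bar{w}^{(T)})-R_S(w^*)\big]\le\frac{\|w^{(1)}-w^*\|_2^2}{2T\eta}+\frac{2\kappa^2\eta}{\rho^2},
\]
which after $\mathbb{E}_S$ is of order $\frac{1}{T\eta}+\kappa^2\eta=\frac{1+T\kappa^2\eta^2}{T\eta}$, matching that part of the claim.

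For the generalization gap I would exhibit a uniform-stability parameter for the map $S\mapsto\bar{w}^{(T)}$ and then invoke Lemma~\ref{lem:gen-stab}. Using the same $\ell_2$-Lipschitz bound, the triangle inequality for the averaging $\bar{w}^{(T)}=\frac{1}{T}\sum_t w^{(t)}$, and Jensen's inequality to pass from the second moment in Lemma~\ref{lem:stab-sgd} to its first moment,
\[
\sup_{x,y}\mathbb{E}_A\big[L_\rho(x,y,\bar{w}^{(T)}(S))-L_\rho(x,y,\bar{w}^{(T)}(\tilde S))\big]\le\frac{2\kappa}{\rho T}\sum_{t=1}^{T}\sqrt{\mathbb{E}_A\|w^{(t)}(S)-\hat{w}^{(t)}\|_2^2}.
\]
Substituting the bound $16e(1+t/m^2)\kappa^2\rho^{-2}(t-1)\eta^2$ of Lemma~\ref{lem:stab-sgd} and using $\sqrt{t(1+t/m^2)}\le\sqrt{t}+t/m$ reduces the sum to $\frac{1}{T}\sum_{t=1}^{T}(\sqrt{t}+t/m)=O(\sqrt{T}+T/m)$, producing the $(\sqrt{T}+T/m)\kappa^2\eta$ term.

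The main obstacle is the generalization gap rather than the optimization error: unlike a single-iterate analysis, controlling the stability of the \emph{averaged} iterate forces me to sum the per-iterate stability bounds, and since these grow like $\sqrt{t}$ (with a $t/m$ correction), the averaged stability parameter scales as $\sqrt{T}+T/m$ instead of a uniform constant. This is exactly what drives the characteristic optimization/stability trade-off in the final bound, where larger $T$ improves optimization but worsens stability. A subtlety to handle carefully is that Lemma~\ref{lem:stab-sgd} controls the second moment of the parameter deviation while the Lipschitz route to uniform stability needs its first moment, so Jensen's inequality must be applied term-by-term \emph{before} summing. Collecting the two contributions and absorbing $\|w^{(1)}-w^*\|_2^2$ and the $\rho$-dependence into the $O(\cdot)$ constant gives precisely the stated estimate.
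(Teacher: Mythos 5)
Your proposal is correct and follows essentially the same route as the paper's proof: the same three-way decomposition (with $\mathbb{E}[R_S(w^*)]=R(w^*)$), the same use of the $2\kappa/\rho$ $\ell_2$-Lipschitz bound derived from Lemma~\ref{lem:lipschitzness}, Jensen plus averaging applied to Lemma~\ref{lem:stab-sgd} to get the $(\sqrt{T}+T/m)\kappa^2\eta$ stability term via Lemma~\ref{lem:gen-stab}, and the standard convex SGD analysis for the $\frac{1+T\kappa^2\eta^2}{T\eta}$ optimization term. The only cosmetic difference is that you apply Jensen term-by-term before averaging while the paper bounds the averaged deviation directly by convexity of the norm; both give the same order.
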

The upper bound \eqref{gen-sgd-a} involves two terms. The first term $\sqrt{T}+T/m$ comes from controlling the generalization error $R(\bar{w}^{(T)})-R_S(\bar{w}^{(T)})$, while the second term $\frac{1+T\eta^2}{T\eta}$ comes from controlling the optimization error $R_S(\bar{w}^{(T)})-R_S(w^*)$. It is clear the optimization error decreases w.r.t. $T$, while the generalization error grows in the learning process. Therefore, we need to trade-off these two terms by early-stoping SGD as done by the following corollary.
We write $B\asymp \widetilde{B}$ if there are absolute constants $c_1$ and $c_2$ such that $c_1B\leq \widetilde{B}\leq c_2B$.
\begin{corollary}\label{cor:gen-sgd}
Let $\{w^{(t)}\}$ be the sequence produced by \eqref{sgd} with $\eta_t=\eta$. If we choose $T\asymp m^2$ and $\eta\asymp T^{-\frac{3}{4}}/\kappa$, then
\[\mathbb{E}\big[R(\bar{w}^{(T)})\big]-R(w^*)\leq O(\kappa m^{-\frac{1}{2}}).\]
\end{corollary}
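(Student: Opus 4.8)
The plan is to derive the corollary directly from the rate in Theorem~\ref{thm:gen-sgd} by substituting the prescribed schedule $T\asymp m^2$ and $\eta\asymp T^{-3/4}/\kappa$ into \eqref{gen-sgd-a}, and then checking that the two competing contributions — the generalization term and the optimization term — both collapse to the target order $O(\kappa m^{-1/2})$. The guiding intuition, already flagged in the discussion following the theorem, is that the generalization error in \eqref{gen-sgd-a} grows with the number of iterations while the optimization error shrinks; the stated choices are precisely the early-stopping/step-size pair that trades these off at equal order.

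First I would treat the generalization term $(\sqrt{T}+T/m)\kappa^2\eta$. Under $T\asymp m^2$ one has $\sqrt{T}\asymp m$ and $T/m\asymp m$, so the prefactor is $\asymp m$. Substituting $\eta\asymp T^{-3/4}/\kappa$ gives $\kappa^2\eta\asymp \kappa T^{-3/4}\asymp \kappa m^{-3/2}$, so the whole term is $\asymp m\cdot \kappa m^{-3/2}=\kappa m^{-1/2}$, as required.

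Next I would split the optimization term as $\frac{1+T\kappa^2\eta^2}{T\eta}=\frac{1}{T\eta}+\kappa^2\eta$. The second piece is the same $\kappa^2\eta\asymp \kappa m^{-3/2}$, which is of strictly lower order than $\kappa m^{-1/2}$ and hence negligible; equivalently $T\kappa^2\eta^2\asymp T^{-1/2}\asymp m^{-1}\to 0$, so the numerator $1+T\kappa^2\eta^2$ is $O(1)$. For the first piece, $T\eta\asymp T^{1/4}/\kappa\asymp m^{1/2}/\kappa$, whence $\frac{1}{T\eta}\asymp \kappa m^{-1/2}$. Combining, the optimization term is also $O(\kappa m^{-1/2})$, and adding the two bounds yields $\mathbb{E}[R(\bar{w}^{(T)})]-R(w^*)\le O(\kappa m^{-1/2})$.

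Since the argument is pure substitution into an already-established rate, there is no genuine analytical obstacle. The only care required is bookkeeping to confirm that the schedule is self-consistent: that $T\eta$ is large enough to keep $1/(T\eta)$ exactly at the rate $\kappa m^{-1/2}$ while $\eta$ is simultaneously small enough that the $\kappa^2\eta$ contributions (appearing in both terms) stay below this rate, and that no subdominant piece hidden in the $O(\cdot)$ of \eqref{gen-sgd-a} overtakes $\kappa m^{-1/2}$ once all powers of $m$ are tallied. These checks are exactly what the exponents $T\asymp m^2$ and $\eta\asymp T^{-3/4}/\kappa$ are engineered to pass.
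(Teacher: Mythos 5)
Your proposal is correct and matches the paper's (implicit) argument: the corollary follows by direct substitution of $T\asymp m^2$ and $\eta\asymp T^{-3/4}/\kappa$ into the bound \eqref{gen-sgd-a} of Theorem~\ref{thm:gen-sgd}, and your bookkeeping — $(\sqrt{T}+T/m)\kappa^2\eta\asymp\kappa m^{-1/2}$, $1/(T\eta)\asymp\kappa m^{-1/2}$, and $\kappa^2\eta\asymp\kappa m^{-3/2}$ negligible — is exactly right.
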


\begin{remark}
According to Theorem \ref{thm:reg} and Corollary \ref{cor:gen-sgd}, we know that both the regularization method and SGD are able to achieve the generalization bound $O(1/\sqrt{m})$, which is minimax optimal. While RRM requires the objective function to be strongly convex, SGD only requires the objective function to be convex. Remarkably, these generalization bounds do not admit any dependency on the size of the label set, and provide a convincing explanation on why SOP often works well even if the problem has more class labels than training examples. To our best knowledge, these are the first label-size free generalization bounds. As compared to Theorem \ref{thm:gen-hp} on high-probability bounds, our generalization bounds here are stated in expectation. It should be noted that our bounds in expectation require the loss functions to be convex, while the high-probability analysis also applies to nonconvex cases.
\end{remark}

\subsection{Applications}

In this section we discuss applications of our bounds and compare them to those of \cite{cortes2016structured}.

\begin{example}\label{exp:mn}Consider pair-wise Markov networks with fixed number of substructures $l$ \cite{taskar2003max}. Specifically, we have $\Y = \Y_1 \times \ldots \times \Y_l$ and $\Y_k \in [c]$ for $k \in [l]$. Further, we have sequence-like connections, i.e., there is an arrangement of output nodes such that if a factor $f \in F$ is connected to two nodes then they are neighbors in that arrangement. Therefore we have $|F|  = l-1$ and $d =c^2$. We further assume an unnormalized hamming loss $L(y, y') = \sum_{k = 1}^l \mathbb{I}_{y_k \ne y'_k}$ so that we normalize later in the bound to get rid of the dependence on $l =|F| +1$.
For regularized learning with these Markov networks, the Rademacher complexity of loss function classes was bounded in \cite{cortes2016structured}
\begin{equation*}
  \Rad(F_{2,\Lambda,\rho}) \le O(\frac{\Lambda\Psi^* c}{\rho \sqrt{m}}).
\end{equation*}
As a comparison, our Rademacher complexity bound in Theorem \ref{thm:main} reduces to an upper bound on $\Rad(F_{2,\Lambda,\rho})$ that has the form
\[
 O\left( \frac{ \Lambda \Psi^*\log m \sqrt{\log(2mc^2l[8\Psi^* \Lambda m/\rho \!+\! 3]\!+\!1)}}{\rho\sqrt{m}}\right).
\]
Therefore, our bound significantly outperforms their bound by dropping their linear dependency on $c$ to a logarithmic dependency. If we further extend the model so that each factor $f$ is connected to $v$ nodes instead of 2, their bound grows, as a function of $v$, as $O(c^{v/2})$ while ours increase only $O(\sqrt{v})$. Furthermore, according to Theorems \ref{thm:reg}, \ref{thm:gen-sgd}, we can get generalization bounds $O(\kappa/\sqrt{m})$ in expectation for both RRM and SGD, where the log dependency 
is further removed. 

\end{example}
\begin{example}
As the second example we consider multi-class classification. In this case we have no substructures and therefore $|F| = 1, \Y_1   = \Y$ where $\Y = [c]$, $d = c$.
In \cite{cortes2016structured}, the Rademacher complexity for multi-class learning with $\ell_2$ regularization was shown to satisfy
\begin{equation*}
  \Rad(F_{2,\Lambda,\rho}) \le O\left(\frac{\Psi^* \Lambda  \sqrt{c}}{\rho\sqrt{m}}\right).
\end{equation*}
Our analysis instead shows $\Rad(F_{2,\Lambda,\rho})$ is bounded by
\begin{equation*}
   O\left( \frac{\Psi^* \Lambda\sqrt{\log(2mc[8\Psi^* \Lambda m/\rho\! +\! 3]\!+\!1)} \log m}{\rho\sqrt{m}}  \right).
\end{equation*}
It is clear that we drop the square root dependency in $c$ in \cite{cortes2016structured} to a log dependence. 
Analogous to Example \ref{exp:mn}, the log dependency can be further removed if we consider generalization bounds in expectation, as shown in Theorems \ref{thm:reg} and \ref{thm:gen-sgd}.
\end{example}
\begin{example} In this example we explore the possibility of combining SOP models above with a learned feature extraction function $\Psi$ as was practically explored in \cite{chen2017deeplab,hinton2012deep}.
Consider the case where $\Psi$ is a CNN that takes $x$ as input and outputs different $D$-dimensional vector $\Psi_f(x,y_f)$ for each factor $f$ and label $y_f$. Chaining the covers, one can bound the Rademacher complexity of the combined class as follows:
\[\widetilde{O}\left(\frac{\sqrt{q-1}\Psi^* \Lambda |F|}{\rho\sqrt{m}}\right)+\widetilde{O}\left(\sqrt{\frac{\bar{D}}{m}} \log(\tilde{G})\right), \]
where the notation $\widetilde{O}$ hides logarithmic factors, $\bar{D}$ is the number of network parameters and $\tilde{G}$ is a product of norms of network weight matrices. The details of the bound and its derivation of this bound are given in the appendix. 

\end{example}

\section{Learning  Weakly Dependent Sequences\label{sec:dependent}}

In the above bounds we assumed that the examples are sampled independently from each other. However, this assumption is often violated in practice. For example, consider the problem of POS tagging. We are usually given a dataset of documents each of which contains a sequence of sentences. There are two natural assumptions. (1) We may assume that each document is a long sequence of dependent words. This assumption is too pessimistic. The considered sample size becomes too small, and the prediction complexity increases while, as sentences get further apart, the dependence between them decreases, and thus the effective sample size increases. (2) We may assume that each sentence is independent of the others within and across documents. This assumption on the other hand is too optimistic. Sentences following each other in the same document indeed have some degree of dependence. We formalize this dependence in a hierarchical manner, thus providing a trade-off between these two assumptions. Namely, we assume that the documents are independent of each other while sentences within a document are only weakly dependent. We note that the term document here does not necessarily mean an actual text document but rather any sequence of examples (e.g., for a dataset of videos, one video is a document as it is a sequence of images).

We now formalize the idea above. We are given a training set of independent documents $\{D_i\}_{i=1}^m$. Each document $D_i$ is a sequence of weakly dependent examples $D_i = (z^j_i)_{j =1 }^J$. Since the structured output prediction framework in the above section subsumed the usual classification paradigm, we assume that the sequence elements classes follows it. That is, $z^i_j \in \X \times \Y =: \Z$, where $\X$ and $\Y$ defined as above.

 Now we define precisely how the examples within each document $D_i$ are weakly dependent. We assume that each example within a given document is sampled from a $\beta$-mixing process, defined below, at times $1, 2, \ldots, J$.

\begin{definition}[Stationary $\beta$-mixing Stochastic Process]

  Let $(z^k)_{k=-\infty}^\infty$  be a stationary stochastic process and $\sigma_L = \sigma((z^k)_{k=1}^L)$ and $\sigma_{L+a} = \sigma((z^k)_{k=L+a}^\infty)$ be the sigma algebras generated by the random variables $Z_1^L=(z^1, \ldots, z^L)$ and $z_{L+a}^\infty = (z^{L+a}, \ldots, Z^\infty)$. Define the $\beta$-mixing coefficient
  $
    \beta(a) := \sup_{L \ge 1} \E \left[\sup_{B \in \sigma_{L+a}} |P(B|\sigma_L) - P(B)|\right].
  $

\noindent  The process is called $\beta$-mixing if $\lim_{a \rightarrow \infty}\beta(a) = 0$. It is called exponentially mixing if $\beta(a) \le \beta_0 \exp{(-\beta_1a^r)}$ or algebraically mixing if $\beta(a) \le \beta_0 / a^r$, for positive $\beta_0$, $\beta_1$ and~$r$.
\end{definition}
Some examples of exponentially mixing process include a class of Autoregressive Moving Average (ARMA) \cite{mokkadem1988mixing} and a class of Markov process \cite{rosenblatt2012markov}.


Now let $\F$ be a structured output prediction function class as defined in the previous section (see \eqref{eq:linear_class}). For $h \in \F$, let $L_h: \Z \rightarrow [0,M]$ be a loss function over elements of the sequence $z^k$. An example for such a loss is given in equation \eqref{eq:loss}. Again we are interested in high probability bounds on the difference between two quantities: the empirical risk $\hat{R}_S(h)$ and the true risk $R(h)$, which are defined as
\[
    R_S(h) = \frac{1}{mJ}\sum_{i=1}^m \sum_{j=1}^JL_h(z^j_i),\quad
    R(h) = \E_{S}[R_S(h)].
\]

Theorem \ref{thm:weakly} summarizes the main results of this section. 
      \begin{theorem}
      \label{thm:weakly}
      Let $F_{p,\Lambda,\rho}$ be the loss class defined in \eqref{eq:loss_class} and let $S$ be a set of independent and identically distributed documents $D_i$, $i = 1, \ldots, m$, where each document is a sequence of examples $(z_i^j)$ $j = 1, \ldots, J$ drawn from a $\beta$-mixing process. For any integer $a > 0$ such that $J$ is a multiple of $2a$. Let $\delta > 2m(\frac{J}{2a}-1)\beta(a)$, then with probability at least $1 - \delta$, the following inequality holds uniformly over all $h \in \F_p$
    \begin{equation}
    \begin{split}
      R(h) \le &\hat{R}_S(h) +  O \left(\frac{\sqrt{2a(q-1)}\Psi^* \Lambda |F|}{\rho\sqrt{mJ}}
      \tilde{L}\right)\\
      &+ \frac{M\sqrt{a}\sqrt{\log\left(\frac{2}{\delta- 2m(\frac{J}{2a}-1)\beta(a)}\right)}}{\sqrt{ mJ}},
      \end{split}
    \end{equation}
where $\tilde{L} = \sqrt{\log(2mJd|F|[8\Psi^* \Lambda mJ/\rho + 3]+1)} \log(mJ)$.
    \end{theorem}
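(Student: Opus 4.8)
The plan is to reduce the weakly dependent problem to an effectively independent one through a blocking argument, and then to transport the covering-number machinery behind Theorem~\ref{thm:main} and the concentration step behind Theorem~\ref{thm:gen-hp} onto the resulting independent blocks. First I would fix the blocking: since $J$ is a multiple of $2a$, cut the time indices $\{1,\dots,J\}$ of each document into $2\mu$ consecutive blocks of length $a$, where $\mu:=J/(2a)$, and separate them into the $\mu$ odd-indexed and the $\mu$ even-indexed blocks. Consecutive blocks within a group are then separated by at least $a$ steps, so their mutual dependence is governed by the mixing coefficient $\beta(a)$; stationarity guarantees that each block carries the same marginal law.

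Next I would invoke a Berbee/Yu-type coupling lemma for $\beta$-mixing sequences: within each group the dependent blocks of a single document can be coupled to a sequence of \emph{independent} blocks sharing the same marginals, so that the two sequences coincide outside an event of probability at most $(\mu-1)\beta(a)$. Applying this to both parity groups and to all $m$ independent documents, the total coupling error is at most $2m(\mu-1)\beta(a)=2m\bigl(\tfrac{J}{2a}-1\bigr)\beta(a)$; this is exactly the admissibility threshold imposed on $\delta$ and the quantity subtracted from $\delta$ inside the logarithm. On the complementary good event the data behave like $N:=m\mu=mJ/(2a)$ independent blocks per group, each block-average lying in $[0,M]$.

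On this good event the argument is the ordinary i.i.d.\ one with $N$ samples. The concentration term follows from McDiarmid/Hoeffding applied to the $N$ independent block-averages: the deviation is of order $M\sqrt{\log(\cdot)/(2N)}=M\sqrt{a}\sqrt{\log(\cdot)}/\sqrt{mJ}$, matching the stated $\tfrac{M\sqrt{a}}{\sqrt{mJ}}$ factor. The complexity term comes from Theorem~\ref{thm:main} with the sample size $m$ replaced by $N$; since $1/\sqrt{N}=\sqrt{2a}/\sqrt{mJ}$, this manufactures precisely the prefactor $\sqrt{2a}$, while the logarithmic factor $\tilde{L}$ is controlled by the total element count $mJ$ inside the $\log$. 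The essential point is that the block-loss class (an average of $L_\rho$ over the $a$ elements of a block) inherits the $\ell_\infty$-Lipschitz continuity and the $[0,M]$-boundedness of $L_\rho$ via Lemma~\ref{lem:lipschitzness}, so the $\ell_\infty$ covering-number estimates used for Theorem~\ref{thm:main} carry over verbatim and keep every constant free of $|\Y|$, yielding the log dependence on $d$ even in the dependent regime.

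The main obstacle is the coupling step. One must verify that a separation of $a$ between consecutive blocks genuinely licenses the bound $\beta(a)$, assemble the total-variation cost coherently across all $\mu$ blocks, both parity groups, and all $m$ documents, and certify that the independent surrogate agrees with the true sample on an event of probability at least $1-2m(\tfrac{J}{2a}-1)\beta(a)$. A secondary difficulty is checking that passing to block-averages does not inflate the covering numbers; this is where the $\ell_\infty$-based analysis is indispensable, since only an $\ell_\infty$-Lipschitz treatment keeps the relevant Lipschitz constant independent of the label-set size and hence preserves the logarithmic dependence on $d$ after blocking.
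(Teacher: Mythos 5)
Your proposal is correct and follows essentially the same route as the paper: split each document into $2\mu$ blocks of length $a$, reduce to the even/odd groups, pay an additive cost $2m(\mu-1)\beta(a)$ to pass to independent surrogate blocks, then run McDiarmid plus the covering-number/Dudley machinery of Theorem~\ref{thm:main} on the $m\mu$ independent block-averages, using the fact that averaging $L_\rho$ over a block does not increase the $\ell_\infty$ covering number (the paper's Lemma~\ref{lem:cov_weak}). The only cosmetic difference is that you invoke a Berbee-type almost-sure coupling, whereas the paper uses Yu's expectation-comparison lemma applied to the indicator of the deviation event; both yield the identical additive $\beta$-mixing term.
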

\begin{remark}
Note that the bound unsurprisingly depends on the same main quantities as the bound in Theorem \ref{thm:gen-hp}. To better interpret it consider the following two extreme cases. (1) The elements inside each document are independent of each other. Note that in this case $\beta(a) = 0 $, for all $a$, hence $a$ can be chosen to be $1$ and the bound boils down to the bound in Theorem \ref{thm:gen-hp}. (2) The elements inside each document are strongly dependent. Thus, $\beta(a) \approx 1 $ for all $a$ and therefore selecting $a = \frac{J}{2}$ leads to the bound in Theorem~\ref{thm:gen-hp} with only $m$ training examples. We further note that $\beta(a) \rightarrow 0$ as $a \rightarrow \infty$, therefore, for any process admitting a fast decaying $\beta(a)$ the term $2m(\frac{J}{2a} -1) \beta(a)$ approaches $0$ fast for moderate $a$. 
\end{remark}

\section{Conclusion\label{sec:conclusion}}
In this paper, we advance the state of the art in the generalization analysis of structured output prediction. We consider two types of generalization bounds: complexity-based and stability-based bounds. Our complexity-based approach produces bounds with high probability that admit a log dependency on the size of the label set. The stability-based approach further removes this log dependency for generalization bounds in expectation.
This significantly improves the existing bounds, which have at least a square root dependency. We also extend our discussion to the setting of learning with weakly dependent training examples. 

A very interesting question is to investigate whether the log dependency in the high probability analysis is an artefact of our analysis or is really essential. Another question is to extend our generalization bounds in expectation to learning with nonconvex functions.

\section*{Acknowledgments}
MK, AL and WM acknowledge support by the German Research Foundation (DFG) award KL 2698/2-1 and by the German Federal Ministry of Science and Education (BMBF) awards 01IS18051A, 031B0770E, and 01MK20014U. YL acknowledges support by NSFC under Grant No. 61806091. 

\bibliographystyle{named}

\bibliography{references}  
\input{appendix.t}
\end{document}